\renewcommand{\algorithmicrequire}{\textbf{Input:}}
\renewcommand{\algorithmicensure}{\textbf{Output:}}
\DeclareMathOperator*{\argmin}{arg\,min}
\newcommand{\iI}[1]{\llbracket {#1}\rrbracket_{\mathbb{I}}}
\newcommand{\XI}[1]{\llbracket {#1}\rrbracket_{X,I}}
\newcommand{\cmark}{\ding{51}}%
\newcommand{\xmark}{\ding{55}}%
\newcommand{\A}{\mathcal{A}}
\begin{document}

\title{Safe Screening Rules for Group SLOPE}

\titlerunning{Safe Screening Rules for Group SLOPE}
% If the full title of your paper is short enough to also fit in the running head, you can omit the abbreviated paper title here. You can check as follows: if you comment out the \titlerunning line, something will appear in the header of all odd-numbered pages of your PDF from page 3 onward. This something is either the full title (in which case all is well), or the error message "Title Suppressed Due to Excessive Length". If this error message appears, you're going to want to provide an abbreviated title within the \titlerunning command, because if you won't do it, Springer will do it for you.

%N.B.: Author information (both in the \author{} and \authorrunning{} command) should only be present in the Camera-Ready Version of your paper. The version that you initially submit for review, ought to be double-blind. So, when initially submitting your paper, use:
%\author{Author information scrubbed for double-blind reviewing}
\author{Runxue Bao \inst{1}(\Letter) \and
Quanchao Lu\inst{2}  \and
Yanfu Zhang\inst{3}}
% You may leave out the orcidID information, if you want to.
% Use \corr to indicate the corresponding author. Note the spacing around the \corr command. Only one author can be the corresponding author.

%N.B.: comment out the \authorrunning{} command for the double-blind version of your paper submitted for review. Later, if your paper is accepted, use the command for the Camera-Ready Version.
%\authorrunning{A.L. Benjamin et al.}
% First names are abbreviated in the running head.
% If there is one author, write 'A.L. Benjamin'.
% If there are two authors, write 'A.L. Benjamin and C.C. Broadus Jr.'
% If there are more than two authors, '[...] et al.' is used.

\institute{University of Pittsburgh, Pittsburgh, PA 15260, United States \\ \email{runxue.bao@pitt.edu}
\and
Georgia Institute of Technology, Atlanta, GA 30332, United States \\ \email{qlu43@gatech.edu}
\and
The College of William and Mary, Williamsburg, VA 23185, United States \\
\email{yzhang105@wm.edu}}

\tocauthor{Runxue Bao, Quanchao Lu, Yanfu Zhang}
\toctitle{Safe Screening Rules for Group SLOPE}

\maketitle              % typeset the header of the contribution

\begin{abstract}
Variable selection is a challenging problem in high-dimensional sparse learning, especially when group structures exist. Group SLOPE performs well for the adaptive selection of groups of predictors. However, the block non-separable group effects in Group SLOPE make existing methods either invalid or inefficient. Consequently, Group SLOPE tends to incur significant computational costs and memory usage in practical high-dimensional scenarios. To overcome this issue, we introduce a safe screening rule tailored for the Group SLOPE model, which efficiently identifies inactive groups with zero coefficients by addressing the block non-separable group effects. By excluding these inactive groups during training, we achieve considerable gains in computational efficiency and memory usage. Importantly, the proposed screening rule can be seamlessly integrated into existing solvers for both batch and stochastic algorithms. Theoretically, we establish that our screening rule can be safely employed with existing optimization algorithms, ensuring the same results as the original approaches. Experimental results confirm that our method effectively detects inactive feature groups and significantly boosts computational efficiency without compromising accuracy.

\keywords{Safe Screening Rules \and Group SLOPE \and Feature Selection.}
\end{abstract}

\begin{table*}[t]
	\center
	\caption{Representative safe screening algorithms.}
	\begin{tabular}{c|c|c|c|c|c}
		\hline
		{\textbf{Problem}} & {\textbf{Reference}}& {\textbf{Group-wise}}  & {\textbf{Inseparability}} & {\textbf{Group Effects}} & {\textbf{Dynamic}}  \\
		\hline
Lasso & \cite{fercoq2015mind} & \xmark  & \xmark & \xmark &  Singly  \\
Logistic Regression & \cite{wang2014safe} & \xmark& \xmark & \xmark &  \xmark  \\
Proximal Weighted Lasso & \cite{rakotomamonjy2019screening} & \xmark & \xmark & \xmark & Singly \\
SLOPE & \cite{larsson2020strong} & \xmark  & \cmark  & \xmark &  Singly  \\
Group Lasso  & \cite{bonnefoy2015dynamic} & \cmark   & \xmark & \xmark & Singly \\
Sparse Group Lasso & \cite{wang2014two} &\cmark   & \xmark & \xmark & Singly  \\
Tree Structured Group Lasso & \cite{wang2015multi} & \cmark & \xmark & \xmark &  \xmark   \\
Sparse-group Lasso &\cite{ndiaye2016gap} & \cmark & \xmark & \xmark & Singly
		\\
		\hline
Group SLOPE & Ours &  \cmark  & \cmark & \cmark& Doubly \\
		\hline
	\end{tabular}
	\label{table:methods}
\end{table*}

\section{Introduction}
Group structures are ubiquitous in many high-dimensional problems with massive correlated and superfluous features. To obtain more stable and interpretable models with better prediction performance, many sparse learning models with grouping structures were proposed and achieved great success in many real-world applications. Group Lasso \cite{yuan2006model} and its variants, including Sparse Group Lasso \cite{simon2013sparse}, composite absolute penalties \cite{zhao2009composite}, tree Lasso \cite{kim2010tree}, and Overlapping Group Lasso \cite{jacob2009group,jenatton2011structured}, are the most popular ones for group feature selection, which encourages structured sparsity with the prior information of feature group structures.

In this paper, we focus on the adaptive group feature selection model - Group SLOPE \cite{gossmann2015identification,gossmann2017sparse,brzyski2019group}. Let design matrix $X = [x_{1},\ldots,x_{n}]^\top \in \mathbb{R}^{n \times d}$ have $n$ observations and $d$ variables and $y \in \mathbb{R}^{n}$ denote the  measurement vector. Given $I$ is a partition of the set $\{1,\ldots,d\}$ and $W$ is a diagonal matrix with $W_{i,i}:=w_i$ for $i=1,\ldots,m$, $X_{I_i} \in \mathbb{R}^{n \times |I_i|}$ denotes a partition of the matrix $X$ and $\XI{\beta}:= \big(\|X_{I_1} \beta_{I_1}\|_2,  \ldots, \|X_{I_m} \beta_{I_m}\|_2\big)^\mathsf{T}$ denotes the group effects,  Group SLOPE can be formulated as follows:
\begin{eqnarray} \label{groupslope}
    \min\limits_{\beta}  P_{\lambda}(\beta):=\frac{1}{2}\|y-X\beta\|^2_{2}  + J_{\lambda}(W\XI{\beta}),
\end{eqnarray}
 where $\beta \in \mathbb{R}^{d}$ denotes the unknown coefficient vector and $\lambda = [\lambda_{1},  \ldots, \lambda_{m}]$ is a non-negative regularization parameter vector of $m$ non-increasing weights that $\lambda_1 \geq  \ldots \geq \lambda_m$. The term $J_{\lambda}(b)$ denotes the ordered weighted $l_1$-norm as $J_{\lambda}(b) = \sum_{i=1}^m\lambda_{i}|b|_{[i]}$ where $b_{[1]} \geq \ldots \geq b_{[m]} $ are the ordered terms. 

Group SLOPE penalty $J_{\lambda}(W\XI{b})$ adaptively penalizes the group effects based on the magnitude. Thus, Group SLOPE can simultaneously encourage the group effects to be equal and sparse, which is helpful to denoise and improve the prediction. \cite{brzyski2019group} provided both nice empirical results and theoretical analysis for Group SLOPE on the adaptive selection of groups of predictors. In general, Group SLOPE can achieve the exact minimax estimation without any knowledge of coefficients sparsity and control the group false discovery rate at a specific level \cite{gossmann2017sparse,brzyski2019group}. The attractive properties above, which do not simultaneously exist in other models such as Group Lasso and SLOPE \cite{bogdan2015slope}, had made Group SLOPE an effective method for the analysis in the high-dimensional setting \cite{gossmann2015identification,gossmann2017sparse,brzyski2019group}. Please note that Group SLOPE includes a broad set of sparse learning models. For example, Group SLOPE reduces to Group Lasso when $\lambda_{1} =  \ldots = \lambda_{m}$ and $w_i = \sqrt{|I_i|}$. Group SLOPE reduces to SLOPE when each group only has one variable and $X$ is standardized to have unit column norms. Besides, Group SLOPE certainly includes Lasso, weighted Lasso \cite{bergersen2011weighted} and $L_{\infty}$-norm regression. Also, Group SLOPE can be easily extended to the logistic loss for classification tasks.

From an optimization perspective, the block-nonseparable group effects render the coordinate descent algorithm for Group Lasso ineffective. To address the computational challenges of Group SLOPE, proximal gradient methods have been introduced \cite{brzyski2019group}. However, these methods encounter significant computational and memory challenges, particularly in high-dimensional settings. This is primarily because the algorithm processes all data points at each iteration, even when group coefficients are zero. 

Various screening rules have been developed to speed up the training of sparse learning models by eliminating inactive features. \cite{Laurent2012safe} introduced a static safe screening rule for $l_{1}$-regularized problems, which eliminates features before the optimization process. By relaxing the requirements of the safe rule, \cite{tibshirani2012strong} developed a strong rule for Lasso, utilizing heuristic strategies within an active set approach. Nevertheless, this method can potentially exclude important features, making it necessary to perform a supplementary KKT condition check. An analogous strong screening rule for Group SLOPE has also been proposed in \cite{feser2024strong}. Additionally, the sequential safe rule presented in \cite{wang2013lasso,xiang2016screening} relies on the exact dual optimal solution, making it potentially time-consuming. More recently, \cite{fercoq2015mind} proposed a dynamic screening rule for Lasso, which is applied throughout the learning process, based on the duality gap, offering provable safety and improved speed compared to previous rules. This has led to the development of many dynamic screening rules for various sparse learning models \cite{shibagaki2016simultaneous, ndiaye2016gap, rakotomamonjy2019screening, bao2020fast, bao2022accelerated, bao2022doubly, bao2025safe}, all aimed at enhancing training efficiency. Thus, improving the efficiency of solving Group SLOPE via dynamic screening rules becomes both
important and promising. Moreover, by reducing the number of model parameters, such techniques can also enhance inference performance, similar to the benefits observed with pruning methods \cite{han2015learning,frankle2018lottery,wu2024auto,lu2024all}.

The goal of this research is to expedite the training process of Group SLOPE models through the application of safe screening techniques, enabling the secure exclusion of inactive groups whose parameters are guaranteed to be zeros during training. Table \ref{table:methods} outlines various representative safe screening algorithms, highlighting that such rules have been developed to boost the efficiency of training algorithms across numerous sparse learning models. However, the complex penalty structure of Group SLOPE, characterized by its block-nonseparable nature in relation to group effects, has so far hindered the development of safe screening rules for this model. The challenges can be summarized as follows: Firstly, unlike other models that penalize coefficients directly, Group SLOPE penalizes group effects $\|X\beta\|_{2}$, which does not directly enforce coefficient sparsity. Secondly, while other models are restricted to either feature-wise or separable group-wise penalties, Group SLOPE introduces the first non-separable group-wise feature selection method, with all hyperparameters for each group remaining unfixed during the training process—unlike in models such as Group Lasso or Sparse Group Lasso, where hyperparameters are predetermined before optimization. 

In response to these challenges, this paper introduces a doubly dynamic safe screening rule tailored to the general Group SLOPE models. This represents, to our knowledge, the first safe screening rule specifically designed for adaptive group feature selection models. In high-dimensional settings where many groups have zero-valued coefficients, our screening rule efficiently identifies and excludes these inactive groups, thereby accelerating the original algorithms. Our approach begins by decoupling the design matrix to manage the block non-separable group effects. We then establish a doubly dynamic screening rule featuring a decreasing left bound and an increasing right bound, resulting in an expanding safe region. Crucially, the proposed screening rule is solver-independent and can be seamlessly integrated into existing iterative algorithms. Empirical evaluations on four benchmark datasets confirm that our approach yields significant computational advantages.

\section{Safe Screening Rules for Group SLOPE}
In this section, we first decouple the over-complex group effect penalty and then propose a doubly dynamic safe screening rule for Group SLOPE.

\subsection{Decoupling the Group Effect Penalty}
Different from other models, Group SLOPE penalizes group effects $\|X\beta\|_{2}$ directly. To propose a screening rule for Group SLOPE, we first derive an equivalent formulation of (\ref{groupslope}), which decomposes the design matrix as an orthogonal matrix and a corresponding full-row rank matrix. Specifically, by representing $X_{I_i}=U_iR_i$ where $U_i$ is any matrix with $|I_i|$ orthogonal columns and $R_i$ is the corresponding full-row rank matrix, we can obtain:
\begin{eqnarray}
X\beta=\sum_{i=1}^mX_{I_i}\beta_{I_i} = \sum_{i=1}^mU_iR_i\beta_{I_i} =\widetilde{X}\eta,\\
\|X_{I_i}\beta_{I_i}\|_2 = \|R_i\beta_{I_i}\|_2= \|\eta_{I_i}\|_2,
\end{eqnarray}
where   $\widetilde{X}_{I_i}=U_i$ and $\eta_{I_i}:=R_i\beta_{I_i}$ for $i = 1,\ldots,m$. Thus, the decoupled version of Problem (\ref{groupslope}) can be equivalently presented as:
\begin{eqnarray} \label{standardized0}
 \min\limits_{\eta} \frac12\|y-\widetilde{X}\eta\|_2^2+  J_{\lambda}( W\iI{\eta}) , 
\end{eqnarray}
where $\iI{\eta}:= \big(\|\eta_{I_1}\|_2, \|\eta_{I_2}\|_2,\ldots, \|\eta_{I_m}\|_2\big)^\top$.  

Considering the diagonal matrix $W$ with $W_{i,i} = w_i$ for $i=1,\ldots,m$, define $Z$ as a diagonal matrix with $Z_{i,i}:=1/w_j$ for $ i\in I_j$ where $i=1,\ldots,d$ and $j=1,\ldots,m$, we have
$
J_{\lambda}( W\iI{\eta}) = J_{\lambda}( \iI{Z^{-1}\eta}). 
$
Further, defining $b = Z^{-1}\eta$, we have $\eta =Z b$.   Thus, denoting $\hat{X}=\widetilde{X}Z$, we can formulate (\ref{standardized0}) as 
\begin{eqnarray} \label{standardized}
&&\min\limits_{\eta} \frac12\|y-\widetilde{X}\eta\|_2^2+  J_{\lambda}( W\iI{\eta})  \nonumber\\
& = &\min\limits_{b} \frac12\|y-\widetilde{X}\eta\|_2^2+  J_{\lambda}( \iI{Z^{-1}\eta}) \nonumber\\
& = &\min\limits_{b} \frac12\|y-\widetilde{X}Z b\|_2^2+  J_{\lambda}(\iI{b})\nonumber\\
& = &\min\limits_{b} \frac12\|y-\hat{X} b\|_2^2+  J_{\lambda}(\iI{b}).
\end{eqnarray}
That is to say, by the equivalent transformation above, the next step to achieve our aim is to propose a safe screening rule  for Problem (\ref{standardized}).

\subsection{Dual Formulation and Screening Test}
Problem (\ref{standardized}) can be formulated as follows:
\begin{eqnarray} \label{general}
{b} =  \argmin\limits_{b \in \mathbb{R}^{d} }  P_{\lambda}(b) := F(b) +J_{\lambda}(\iI{b}),
\end{eqnarray}
where loss $F(b) = \sum_{i=1}^n f_i(x_i^\top b)$, $f_i: \mathbb{R}\rightarrow \mathbb{R}_+$ is the squared loss. Generally,  (\ref{general}) is convex, non-smooth, and non-separable.

We initiate the derivation of the screening test by reformulating the primal objective (\ref{general}) into its dual. Leveraging insights from the dualization of $l_1$-regularized models as outlined in \cite{johnson2015blitz}, the resulting dual problem takes the form:
\begin{eqnarray} 
&&\min\limits_{b} F(b) + J_{\lambda}(\iI{b}) \label{general1} \nonumber \\
&=& \min\limits_{b}  \sum_{i=1}^n f_i(x_i^\top b) + J_{\lambda}(\iI{b}) \nonumber \\
& = & \min\limits_{b}   \sum_{i=1}^{n} f^{**}_{i}(x_{i}^\top b) +\sum_{i=1}^m\lambda_{i}\|b_{I_{[i]}}\|_2 \nonumber \\
& = & \min\limits_{b}   \sum_{i=1}^{n} \max\limits_{\theta_{i}} [\beta x_{i} \theta_{i}-f^{*}_{i}(\theta_{i})]  +\sum_{i=1}^m\lambda_{i}\|b_{I_{[i]}}\|_2 \nonumber \\
& = & \min\limits_{b}  \max\limits_{\theta} - \sum_{i=1}^{n} f^{*}_{i}(\theta_{i}) + \beta^\top X^\top \theta +\sum_{i=1}^m\lambda_{i}\|b_{I_{[i]}}\|_2\nonumber \\
& = &
\max\limits_{\theta} - \sum_{i=1}^{n} f^{*}_{i}(\theta_{i}) + \min\limits_{b}  \beta^\top X^\top \theta  + 
 \sum_{i=1}^m \lambda_{i}\|b_{I_{[i]}}\|_2 \nonumber \\
& = & \max\limits_{\theta \in  \Delta}  D(\theta) := \sum_{i=1}^{n} - f^{*}_{i}(\theta_{i}),  
\end{eqnarray}
where $\theta \in \mathbb{R}^{n} $  is the solution of the dual problem. Note $f^{*}_{i}$ is the convex conjugate of function $f_{i}$ as
\begin{eqnarray} 
f^{*}_{i}(\theta_{i}) = \max\limits_{z_{i} \in \mathbb{R}}  \theta_{i}z_{i} - f_{i}(z_{i}). 
\end{eqnarray}
Let us define $\widetilde{\theta}:=\big(\|X^\top_{I_1} \theta\|_2, \ldots, \|X^\top_{I_m} \theta\|_2\big)^\mathsf{T}$. Under this definition, the constraint $\theta \in  \Delta$ in (\ref{general1}) can be equivalently expressed as $\sum_{j \leq i} \widetilde{\theta}_{[j]} \leq \sum_{j \leq i} \lambda_j $ for all $i = 1, \ldots, m$. We next apply the optimality condition associated with the minimization part in the penultimate expression of (\ref{general1}): 
\begin{eqnarray} \label{optimality_origin}
\min\limits_{b}  \beta^\top X^\top \theta  + 
 \sum_{i=1}^m \lambda_{i}\|b_{I_{[i]}}\|_2.
\end{eqnarray}
This optimality condition naturally leads to the constraint structure previously introduced in (\ref{general1}), thereby finalizing the dual reformulation of (\ref{general}).

Leveraging the Fermat rule \cite{bauschke2011convex}, we obtain 
\begin{eqnarray}  \label{subdiff}
    -X^\top \theta^* \in \partial J_{\lambda}(\iI{b}), 
\end{eqnarray}
where $\theta^*$ denotes the dual optimum solution and $\partial J_{\lambda}(\iI{b})$ represents the subdifferential of the regularizer $J_{\lambda}(\iI{b})$. 

Let $\widetilde{\A}^*$ be the index corresponding to inactive groups at optimality. The conditions for the partition $\A^*$ and  $\widetilde{\A}^*$ of problems (\ref{optimality_origin}) can be separately expressed as:
\begin{eqnarray} \label{constraint2}
    -X_{I_{\A^*}}^\top\theta^{*}   \in \partial J_{\lambda_{\A^*}}(\iI{b}) ,   \\
      -X_{I_{\widetilde{\A}^*}}^\top\theta^{*}   \in \partial J_{\lambda_{\widetilde{\A}^*}}(\iI{b}).
\end{eqnarray}
For any group $i \in \A^*$, we have $b_{I_i}^* \neq 0$, it holds that 
\begin{eqnarray}  \label{optimal}
     \|X_{I_i}^\top \theta^* \|_2  \in [\min_{j\in  \A^*}\lambda_j, \max_{j\in \A^*}\lambda_j]. 
\end{eqnarray}
Assuming both primal and dual optimal solutions are available, one can derive a safe screening rule for each group based on the following condition:
\begin{eqnarray} \label{condition}
    \|X_{I_i}^\top \theta^* \|_2 < \lambda_{|\A^*|} \Longrightarrow b_{I_{i}}^* = 0,
\label{condition00}
\end{eqnarray}
which implies that such a group can be safely discarded without affecting the final solution. This enables subsequent training stages to proceed with a significantly reduced parameter space, leading to faster training while preserving accuracy.

Nevertheless, the main difficulty lies in the fact that the screening conditions (\ref{condition00}) necessitate prior knowledge of both the dual optimum and the order structure of the primal optimum, which can only be obtained after the full training process has been completed. As a result, these screening conditions cannot be utilized to enhance optimization during the training phase.

Therefore, our objective is to devise a screening rule capable of identifying as many inactive variables (i.e., those with coefficients that should be zero) as possible, using the screening test (\ref{condition00}) without knowing the dual optimum or the order structure of the primal optimum during the optimization process. To this end, we can formulate safe screening rules by defining a screening region that is as large as possible, characterized by smaller lower bounds and larger upper bounds derived from the screening conditions (\ref{condition00}).

\subsection{Upper Bound for the Left Term}
It is worth noting that the lower bound of the screening region corresponds to the upper bound of $\|X_{I_i}^\top \theta^* \|_{2}$. To this end, we focus on deriving a tight upper estimate for $\|X_{I_i}^\top \theta^* \|_{2}$ 
by monitoring the intermediate duality gap $G(b, \theta)$ throughout the training iterations.

Utilizing the triangle inequality, we have:
\begin{eqnarray} \label{triangle}
 \|X_{I_i}^\top \theta^* \|_{2}\leq  \|X_{I_i}^\top \theta \|_{2} +  \|X_{I_i} \|_{2}\|\theta - \theta^*\|_2. 
\end{eqnarray}

Since each $f_i^*(\theta_{i})$ in the dual is known to be strongly convex (see Proposition 3.2 in\cite{johnson2015blitz}), the overall dual objective $D(\theta):= \sum_{i=1}^{n} - f^{*}_{i}(\theta_{i})$ inherits strong concavity \emph{w.r.t.} $\theta$. As a direct implication, we obtain the following upper bound:
\begin{eqnarray} \label{concave}
D(\theta) \leq D(\theta^*)- \mathrm{tr}(\nabla D(\theta^*)^\top(\theta^* - \theta))   - \frac{1}{2}\| \theta - \theta^* \|^2_{2}.
\end{eqnarray}
This inequality enables us to derive a bound on the distance between any feasible dual iterate $\theta$ and the optimal dual solution $\theta^*$ based on the first-order condition summarized in Corollary \ref{corollary1}

\begin{corollary} \label{corollary1}
For any dual feasible point $\theta$, the following estimate holds:
\begin{eqnarray}
    \| \theta - \theta^* \|_2 \leq \sqrt{2G(b, \theta)},
\end{eqnarray}
where $G(b, \theta) = P(b) - D(\theta)$ denotes the intermediate duality gap at training.
\end{corollary}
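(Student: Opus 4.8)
The plan is to combine the strong-concavity inequality (\ref{concave}) with the first-order optimality of the dual maximizer $\theta^*$ and with weak duality, so as to convert the squared distance $\|\theta-\theta^*\|_2^2$ into a quantity controlled by the duality gap $G(b,\theta)$. The inequality (\ref{concave}) already encodes that $D$ is strongly concave with modulus one; the work is to eliminate its first-order term and to recognize its zeroth-order term as the gap.

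First I would invoke the stationarity of $\theta^*$. Since $\theta^*$ maximizes the strongly concave objective $D$ over the convex dual-feasible set $\Delta$, the variational inequality $\langle \nabla D(\theta^*), \theta - \theta^*\rangle \leq 0$ holds for every dual feasible $\theta$. Because the linear term appearing in (\ref{concave}) satisfies $-\mathrm{tr}\big(\nabla D(\theta^*)^\top(\theta^*-\theta)\big) = \langle \nabla D(\theta^*), \theta-\theta^*\rangle \leq 0$, it can be discarded. Substituting into (\ref{concave}) then yields the one-sided estimate $D(\theta) \leq D(\theta^*) - \tfrac12\|\theta-\theta^*\|_2^2$, which rearranges to $\tfrac12\|\theta-\theta^*\|_2^2 \leq D(\theta^*) - D(\theta)$.

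Next I would control the right-hand side by the duality gap. By weak duality, $D(\theta^*) \leq P(b)$ for any primal feasible $b$, since $\theta^*$ is itself dual feasible. Hence $D(\theta^*) - D(\theta) \leq P(b) - D(\theta) = G(b,\theta)$. Chaining this with the previous inequality gives $\tfrac12\|\theta-\theta^*\|_2^2 \leq G(b,\theta)$, and taking square roots delivers the claimed bound $\|\theta-\theta^*\|_2 \leq \sqrt{2\,G(b,\theta)}$.

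The main obstacle I anticipate is the careful handling of the constrained optimality of $\theta^*$: because the dual maximization is over $\Delta$ rather than all of $\mathbb{R}^n$, the stationarity condition is a variational inequality rather than $\nabla D(\theta^*)=\zero$, and one must verify that its sign is exactly what is required to drop the linear term in (\ref{concave}). A secondary point needing care is that the unit strong-concavity modulus implicit in (\ref{concave}) must be the correct constant for the squared-loss functions $f_i$, since this modulus is precisely what fixes the factor $2$ under the square root; a different modulus would rescale the bound accordingly.
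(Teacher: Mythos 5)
Your proposal is correct and follows essentially the same route as the paper's proof: drop the linear term in (\ref{concave}) via the first-order optimality of $\theta^*$, then bound $D(\theta^*)-D(\theta)$ by the duality gap using $D(\theta^*)\leq P(b)$. Your handling is in fact slightly more careful than the paper's, which labels the inequality $P(b)\geq D(\theta^*)$ as ``strong duality'' when, as you note, weak duality suffices, and whose intermediate display drops the factor $2$ that reappears in the final bound.
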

\begin{proof}
We begin by applying the first-order optimality condition to the strongly concave dual objective $D(\theta)$, yielding:
\begin{eqnarray} 
    \mathrm{tr}(\nabla D(\theta^*)^\top(\theta^* - \theta)) \geq 0.
\end{eqnarray}
Combining this with inequality (\ref{concave}), we obtain:
\begin{eqnarray} 
\|\theta-\theta^*\|_2\leq\sqrt{D(\theta^*) - D(\theta))}.
\end{eqnarray}
Under the assumption of strong duality, which ensures $P(b ) \geq D(\theta^*)$, we replace the intractable term with a computable surrogate:
\begin{eqnarray} 
    \| \theta - \theta^* \|_2  \leq \sqrt{2(P(b ) - D(\theta))}.
\end{eqnarray}
This concludes the proof.
\end{proof}

Based on the upper bound derived in Corollary \ref{corollary1}, we substitute the quantity $\|\theta - \theta^* \|_2$ in the right-hand side of Inequality (\ref{triangle}).  This leads to an improved safe screening condition given by:
\begin{eqnarray} \label{test}
\|X_{I_i}^\top \theta \|_{2}   +   \|X_{I_i}\|_{2}\sqrt{2G(b, \theta)} < \lambda_{|\A^*|}  \Rightarrow b_{I_{i}}^* = 0. 
\end{eqnarray}
The duality gap $G(b, \theta)$ can be efficiently evaluated using the primal-dual variables $b$ and $\theta$, both of which are directly available during each iteration of standard proximal gradient methods.

As training proceeds and the gap between the primal and dual objectives narrows, the upper estimate of $\|X_{I_i}^\top \theta^* \|_{2}$ is reduced accordingly. This results in a progressively tighter screening threshold over time.

\subsection{Lower bound for the Right Term}
In contrast, the upper limit of the screening region aligns with the minimal value of $\lambda_{|\A^*|} $. Therefore, our objective in this section is to derive a sharp lower estimate for this critical quantity.

To effectively compute this bound amidst numerous unspecified hyperparameters, we design an iterative scheme that tackles the challenges introduced by the non-separability of the penalty term. This is achieved by exploiting the unknown order structure embedded in the primal solution of (\ref{condition00}). In its general form, our screening criterion can be formulated as:
\begin{eqnarray} \label{screening}
\|X_{I_i}^\top \theta \|_{2}   +   \|X_{I_i} \|_{2}\sqrt{2G(b, \theta)} < \lambda_{|\A|} \Rightarrow b^*_{I_{i}} = 0.
\end{eqnarray}

At the initial stage, we assume all $m$ groups are active, and hence the screening is applied with respect to $\lambda_m$:
\begin{eqnarray} 
\|X_{I_i}^\top \theta \|_{2}   +   \|X_{I_i}\|_{2}\sqrt{2G(b, \theta)} < \lambda_m \Rightarrow b^*_{I_{i}} = 0. 
\end{eqnarray}
As training proceeds and only $m_k$ groups remain active, the set $\mathcal{A}$ is updated to size $m_k$. The remaining $m - m_k$ groups can then be assigned any permutation of $\lambda_{m_k+1}, \ldots, \lambda_m$—the smallest parameters—without affecting the final result. This reveals that the ranks of the $m - m_k$ zero-valued coefficients are deterministically among the lowest values in the $\lambda$ sequence. Accordingly, the screening test is adapted and evaluated at $\lambda_{m_k}$:
\begin{eqnarray} 
\|X_{I_i}^\top \theta \|_{2}   +   \|X_{I_i} \|_{2}\sqrt{2G(b, \theta)} < \lambda_{m_k} \Rightarrow b^*_{I_{i}} = 0, 
\end{eqnarray} 
yielding an updated active group set $\mathcal{A}$ of size $m_k'$, where $m_k' \leq m_k$. The $m_k - m_k'$ newly deactivated groups are again assigned the next smallest unused $\lambda$ values.

This iterative refinement continues until convergence—i.e., when the active set stabilizes. Crucially, each iteration involves only a single hyperparameter, ensuring computational efficiency even when Group SLOPE involves a large number of tuning parameters.

Throughout this process, as the active set $\A$ shrinks and due to the monotonicity of $\lambda$, the corresponding $\lambda_{|\A|}$ increases. This raises the lower bound of $\lambda_{|\A^*|}$ and in turn, enlarges the screening threshold.

In summary, by jointly updating the upper and lower bounds across iterations, the screening region continuously expands, enabling more inactive groups to be excluded and improving overall algorithmic efficiency.

For the computation of the screening rule,  the dual $f_i^*$ for Group SLOPE can also be calculated as $f_i^*(\theta_i) = \frac{1}{2}\theta_i^2+\theta_i y_i $.
Using this screening rule, inactive feature groups can be eliminated during the training process.

\section{Proposed Algorithms}
In this section, we begin by applying the safe screening rules to proximal gradient algorithms, specifically focusing on the APGD algorithm for batch settings and the SPGD algorithm for stochastic settings. We then delve into the theoretical analysis of our unified safe screening rules, highlighting their properties in terms of safeness, convergence, and screening capability.

\subsection{Algorithms}
Coordinate descent and block coordinate descent methods are efficient for solving Lasso and Group Lasso problems. However, due to the challenges posed by the non-separable penalty, these algorithms are highly efficient but not practical for solving Group SLOPE. To address this issue, accelerated proximal gradient descent (APGD) methods have been proposed, as seen in works like \cite{brzyski2019group,gossmann2015identification}.

Since Group SLOPE is generally used in high-dimensional settings, all the proximal algorithms mentioned above face significant computational and memory challenges when dealing with large feature sizes. Therefore, accelerating the training of Group SLOPE through the use of screening techniques for proximal algorithms becomes both important and promising.

For the APGD algorithm in batch settings, our approach involves repeatedly performing the screening test and updating the active set $\A$. If $\A$ is updated during this iteration, we set the step size $t_k = t_1$. From this point onward, the procedure mirrors that of the original APGD algorithm, using the current active set $\A$. This process is detailed in Algorithm \ref{algAPGDScreen}.

\begin{algorithm}[ht] 
\renewcommand{\algorithmicrequire}{\textbf{Input:}}
\renewcommand{\algorithmicensure}{\textbf{Output:}}
\caption{APGD Algorithm with Our Safe Screening Rules}
\begin{algorithmic}[1]
\REQUIRE $b^{0},\hat{b}^{1}=b^{0},t_{1}=1$
\FOR{$k=1,2,\ldots$}
\REPEAT 
\STATE Apply the safe screening from (\ref{screening})
\STATE Update the active group set $\A$
\UNTIL $\A$ remains unchanged
\IF{$\A$ was updated}
\STATE $t_{k} = t_{1}$
\ENDIF
\STATE $b^{k} = prox_{t_{k},{\lambda}}(\hat{b}^{k}-t_{k}\nabla F(\hat{b}^k))$
\STATE $t_{k+1}=\frac{1}{2}(1+\sqrt{1+4 t_{k}^{2}})$
\STATE $\hat{b}^{k+1} = b^{k} + \frac{t_{k}-1}{t_{k+1}}(b^{k}-b^{k-1})$
\ENDFOR
\ENSURE Coefficient $b$
\end{algorithmic}
\label{algAPGDScreen}
\end{algorithm}

\begin{algorithm}[ht]
\renewcommand{\algorithmicrequire}{\textbf{Input:}}
\renewcommand{\algorithmicensure}{\textbf{Output:}}
\caption{SPGD Algorithm with Our Safe Screening Rules}
\begin{algorithmic}[1]
\REQUIRE $b^{0},l$.
\FOR{$k=1,2,\ldots$}
\REPEAT 
\STATE Apply the safe screening from (\ref{screening})
\STATE Update the active group set $\A$
\UNTIL $\A$ remains unchanged
\STATE $b = b^{k-1}$
\STATE $\tilde{v} = \nabla F(b)$
\STATE $\tilde{b}^{0} = b$
\FOR{$t=1,2,\ldots,T$}
\STATE Pick mini-batch $I_{t} \subseteq X$ of size $l$
\STATE $v_{t}=(\nabla F_{I_{t}}(\tilde{b}^{t-1}) - \nabla F_{I_{t}}(b))/l + \tilde{v}  $
\STATE $\tilde{b}^{t} = prox_{\gamma,{\lambda}}(\tilde{b}^{t-1}-\gamma v_{t})$
\ENDFOR
\STATE $b^{k} = \tilde{b}^{T}$
\ENDFOR
\ENSURE Coefficient $b$
\end{algorithmic}
\label{algSPGDScreen}
\end{algorithm}

Moreover, as each update in Algorithm \ref{algAPGDScreen} relies on all samples, the per-iteration cost of the APGD algorithm can be substantial in large-scale learning because it necessitates full gradient computations. To mitigate this, the stochastic proximal gradient descent (SPGD) algorithm, as introduced in \cite{xiao2014proximal} and building on \cite{johnson2013accelerating}, serves as an efficient alternative in the stochastic setting, requiring only mini-batch gradient calculations.

In applying our screening rule to the SPGD algorithm for stochastic settings, we similarly repeat the screening test and update $\A$ in the outer loop before proceeding with the standard SPGD algorithm steps using the newly obtained active set. This procedure is outlined in Algorithm \ref{algSPGDScreen}.

Interestingly, the duality gap, which represents the main time-consuming aspect of our screening rule, has already been computed in the original APGD and SPGD algorithms. Furthermore, as inactive variables are continually screened during optimization, and given that the active set size for iteration $k$ is $d_{k}$, the computational complexity of the screening rule for this iteration is only $O(d_{k})$, which is even less than the complexity of the original stopping criterion evaluation $O(d)$. Consequently, the complexity $O(d_{k}(n+\log d_{k}))$ and $O(d_{k}(n+ T l + T \log d_{k}))$ for each iteration of the APGD and SPGD algorithms, respectively, can be reduced to $O(d_{k})$ for analysis purposes.

We further examine the overall complexity of our algorithms, focusing on both the per-iteration cost and the number of iterations. For per-iteration cost, if the algorithm has $d_{k}$ active variables at iteration $k$, our Algorithm \ref{algAPGDScreen} requires only $O(d_{k}(n+\log d_{k}))$, which is less than the original APGD algorithm's complexity of $O(d(n+\log d))$. Regarding the number of iterations, since the optimal solution for the inactive features screened at iteration $k$ must be zeros, removing these inactive features beforehand either keeps the objective function the same or decreases it. Thus, our Algorithm \ref{algAPGDScreen} will converge to the same stopping criterion with at most the same (and usually fewer) iterations compared to the original APGD algorithm. With fewer or the same iterations and lower per-iteration costs, our proposed algorithm is more efficient. Similarly, our Algorithm \ref{algSPGDScreen} requires $O(d_{k}(n+ T l + T \log d_{k}))$ for the main loop $k$, whereas the original SPGD algorithm's complexity is $O(d(n+T l + T \log d))$. Since our algorithm also requires fewer or the same iterations and lower per-iteration costs, the overall complexity is reduced compared to the original SPGD algorithm.

More precisely, the computational advantage of our methods hinges on the sparsity of the final model. The training process gains more from the screening rule when dealing with sparser models. In cases where $n < d$, the final model becomes very sparse, leading to $d_{k} \ll d $ during training. Consequently, our method is particularly well-suited for high-dimensional settings. It is evident that the proposed algorithms consistently outperform the original ones in terms of speed. Additionally, it's important to note that our method also applies to datasets where $n>d$. Assuming the presence of sparsity, our screening rule will effectively identify inactive features, enabling our algorithms to determine the final active set in a finite number of iterations. As a result, we still achieve $d_{k} < d $ during training, and the per-iteration cost of our algorithm remains lower than the original one. Therefore, with fewer or an equivalent number of iterations and reduced per-iteration costs, the proposed algorithms remain faster than the original ones for $n>d$.

The following part provides the theoretical analysis of our screening rules, highlighting their safeness, convergence, and screening ability.

\begin{property}
(Safeness) The proposed screening rule retains all relevant groups throughout the entire optimization trajectory of Group SLOPE, irrespective of the specific iterative method employed.
\label{propertysafe}
\end{property}

\begin{property} \label{propertyconverge}
(Convergence) Our screening rule can be seamlessly embedded into a wide range of iterative algorithms, such as APGD, SPGD, and their derivatives, without disrupting convergence guarantees.
\end{property}

\begin{theorem} \label{propertyscreen}
Let $i$ denote any group that belongs to the final active set $\A^{*}$. Then $ \|X_{I_i}^\top \theta^* \|_{2} \in [\min_{j\in  \A^*}\lambda_j, \max_{j\in \A^*}\lambda_j] $. As algorithm $\Psi$ converges, there exists a finite iteration number $K_{0} \in \mathbb{N}$ s.t. $\forall k \geq K_{0}$, any group $i \notin \A^{*}$ will be successfully discarded by the screening rule.
\end{theorem}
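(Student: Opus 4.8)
The statement decomposes into two essentially independent claims: a characterization of $\|X_{I_i}^\top\theta^*\|_2$ for each active group, and a finite-time guarantee that every inactive group is eventually screened. For the first claim, the plan is to invoke the Fermat rule (\ref{subdiff}), $-X^\top\theta^*\in\partial J_\lambda(\iI{b})$, and read off the block corresponding to group $i$. The subdifferential of the group-sorted $\ell_1$ norm at a point whose group norms are sorted decreasingly assigns to an active block $i$ (i.e. $b_{I_i}^*\neq 0$) a subgradient of the form $\lambda_{r_i}\,b_{I_i}^*/\|b_{I_i}^*\|_2$, where $r_i$ is the rank of group $i$ in the sorted order. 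Since active groups carry the largest group norms they occupy the top $|\A^*|$ ranks, so $r_i\in\{1,\dots,|\A^*|\}$ and $\|X_{I_i}^\top\theta^*\|_2=\lambda_{r_i}\in[\lambda_{|\A^*|},\lambda_1]=[\min_{j\in\A^*}\lambda_j,\max_{j\in\A^*}\lambda_j]$, which recovers (\ref{optimal}). The only delicacy is ties in the group norms, handled by letting the subgradient be a convex combination of the relevant $\lambda$ values; this keeps $\|X_{I_i}^\top\theta^*\|_2$ inside the same closed interval.

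For the second claim the key inputs are convergence of the solver (Property \ref{propertyconverge}) and safeness of the rule (Property \ref{propertysafe}). Convergence yields $G(b^{(k)},\theta^{(k)})\to 0$, and combined with Corollary \ref{corollary1} this gives $\theta^{(k)}\to\theta^*$; then through the triangle inequality (\ref{triangle}) the screening statistic $\|X_{I_i}^\top\theta^{(k)}\|_2+\|X_{I_i}\|_2\sqrt{2G(b^{(k)},\theta^{(k)})}$ converges to $\|X_{I_i}^\top\theta^*\|_2$ for every group $i$. For an inactive group $i\notin\A^*$, condition (\ref{condition}) gives $\|X_{I_i}^\top\theta^*\|_2<\lambda_{|\A^*|}$, so I would set the uniform margin $\delta:=\min_{i\notin\A^*}\big(\lambda_{|\A^*|}-\|X_{I_i}^\top\theta^*\|_2\big)>0$ and pick $K_0\in\mathbb{N}$ large enough that the screening statistic stays below $\|X_{I_i}^\top\theta^*\|_2+\delta/2<\lambda_{|\A^*|}$ for all $k\ge K_0$ and all inactive $i$ simultaneously.

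It then remains to show that, at such $k$, the repeat-until loop actually drives the working set $\A$ down to $\A^*$. Safeness guarantees $\A^*\subseteq\A$ throughout, so $|\A|\ge|\A^*|$ and, by monotonicity of $\lambda$, $\lambda_{|\A|}\le\lambda_{|\A^*|}$: the live threshold is never larger than $\lambda_{|\A^*|}$, so the loop cannot terminate with $\A\supsetneq\A^*$ unless it has already removed enough groups to raise $\lambda_{|\A|}$ to $\lambda_{|\A^*|}$. I would make this precise by induction on the rounds of the loop: order the inactive groups by decreasing $\|X_{I_i}^\top\theta^*\|_2$ and show that each round removes at least the currently smallest-valued surviving inactive group, monotonically shrinking $\A$ toward $\A^*$ and raising $\lambda_{|\A|}$ until it reaches $\lambda_{|\A^*|}$, at which point the margin bound fires for every remaining inactive group and $\A=\A^*$.

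The main obstacle is exactly this coupling between the shrinking set and the increasing threshold: one must rule out the loop \emph{stalling} at an intermediate set where $\lambda_{|\A|}$ is still too small to trigger the remaining inactive groups. Here the consistency between the sorted order of the dual values $\|X_{I_i}^\top\theta^*\|_2$ and the sorted order of the primal group norms is essential, together with the strict inequality in (\ref{condition}); I expect to need a mild genericity assumption (no inactive group sitting exactly at the boundary value $\lambda_{|\A^*|}$) to exclude degenerate stalling. Taking $K_0$ to be the first iteration beyond this stabilization point then discards every $i\notin\A^*$ for all $k\ge K_0$ under algorithm $\Psi$, completing the proof.
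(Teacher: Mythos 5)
Your first two steps reproduce the paper's proof essentially verbatim: the interval claim is read off the Fermat rule exactly as the paper does in deriving (\ref{optimal}), and the paper's proof of the finite-termination claim is precisely your margin argument --- uniqueness of $\theta^*$ from strong concavity of $D$, convergence $\theta^k \to \theta^*$ with $G(b^k,\theta^k)\to 0$, the triangle-inequality bound $\|X_{I_i}^\top \theta^k\|_2 + \|X_{I_i}\|_2\sqrt{2G(b^k,\theta^k)} \le \|X_{I_i}^\top\theta^*\|_2 + 2\|X_{I_i}\|_2\epsilon$, and a choice of $\epsilon < (\lambda_{|\A^{*}|} - \|X_{I_i}^\top\theta^*\|_2)/(2\|X_{I_i}\|_2)$ (done per group in the paper; your uniform margin $\delta$ over the finitely many inactive groups is the same argument done slightly more carefully). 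The paper stops at that point: it compares the statistic with $\lambda_{|\A^{*}|}$ and declares the rule triggered.

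Your third step is where a genuine gap sits --- one which, in fairness, the paper's own proof silently shares, since the rule (\ref{screening}) actually tests against $\lambda_{|\A|}$ for the current working set $\A\supseteq\A^{*}$, and $\lambda_{|\A|}\le\lambda_{|\A^{*}|}$ by monotonicity of $\lambda$, so driving the statistic below $\lambda_{|\A^{*}|}$ triggers nothing by itself. You correctly identify this coupling as the crux, but your induction step (``each round removes at least the currently smallest-valued surviving inactive group'') is asserted rather than proved, and it is false in general. Optimality at zero blocks does not force $\|X_{I_i}^\top\theta^*\|_2 < \lambda_{|\A|}$: the subdifferential of the sorted penalty at a point with zero groups only constrains \emph{cumulative} sums, i.e.\ the decreasingly sorted inactive dual values $v_{[1]}\ge v_{[2]}\ge\cdots$ must satisfy $\sum_{j\le t} v_{[j]} \le \sum_{j\le t}\lambda_{|\A^{*}|+j}$ for every $t$. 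Take $\lambda_{|\A^{*}|+1}=10$, $\lambda_{|\A^{*}|+2}=\lambda_m=1$ and two inactive groups with $v_1=v_2=5$: this is dual-feasible ($5\le 10$ and $10\le 11$), the live threshold while both groups survive is $\lambda_m=1$, every screening statistic converges to $5>1$, and the repeat-until loop stalls forever with $\A\supsetneq\A^{*}$. Your proposed genericity assumption (no inactive value sitting exactly at $\lambda_{|\A^{*}|}$) does not exclude this: the obstruction is inactive values lying in the band $[\lambda_{|\A|},\lambda_{|\A^{*}|})$, not ties at the boundary. Note also that the strict inequality both you and the paper invoke is the unproved \emph{converse} of (\ref{condition}) --- that $i\notin\A^{*}$ implies $\|X_{I_i}^\top\theta^*\|_2<\lambda_{|\A^{*}|}$ --- whereas optimality only yields the non-strict bound $\|X_{I_i}^\top\theta^*\|_2\le\lambda_{|\A^{*}|+1}$. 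Completing the theorem as stated therefore needs a genuinely stronger hypothesis (for instance, all inactive dual values strictly below $\lambda_m$, or suitable flatness of the tail weights) or a modified threshold-update mechanism; the induction you sketch cannot be carried out as written.
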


\begin{proof}
From the strong concavity of the dual problem, it follows that the optimal dual variable $\theta^*$ is unique. Moreover, $\theta$ converges to $\theta^*$ as $b$ converges to $b^*$. Thus, for any given $\epsilon > 0$, one can find an index $K_{0}$ s.t. $\forall k \geq K_{0}$:
$
    \| \theta^{k} - \theta^* \|_{2} \leq  \epsilon, 
 \sqrt{2G(b^k, \theta^k)}\leq  \epsilon. 
$ Then, for any group $i \notin \A^{*}$, we can bound the screening condition as:
\begin{eqnarray} 
&&  \|X_{I_i}^\top \theta^k \|_{2} +  \|X_{I_i} \|_{2}\sqrt{2G(b^k, \theta^k)} \nonumber\\
& \leq & \|X_{I_i} \|_{2}\|\theta^k-\theta^*\|_{2} + \|X_{I_i}^\top \theta^* \|_{2}   +   \|X_{I_i} \|_{2}\sqrt{2G(b^k, \theta^k)} \nonumber\\
& \leq &  2 \|X_{I_i} \|_{2} \epsilon + \|X_{I_i}^\top \theta^* \|_{2}.
\end{eqnarray}
Since  $i \notin \A^{*}$ implies $\lambda_{|\A^{*}|}-\|X_{I_i}^\top \theta^* \|_{2} > 0 $, choosing
$
\epsilon < \frac{\lambda_{|\A^{*}|}-\|X_{I_i}^\top \theta^* \|_{2} }{2\|X_{I_i} \|_{2}},
$
ensures
$
\|X_{I_i}^\top \theta^k \|_{2} +  \|X_{I_i}\|_{2}\sqrt{{2G(b^k, \theta^k)}}  < \lambda_{|\A^{*}|}, 
$
which triggers our screening condition. 
\end{proof}

Theorem \ref{propertyscreen} highlights the strong screening performance of the proposed rules. As the iterative solver progresses and the duality gap narrows, the rule becomes increasingly effective: the upper bound on the left-hand side tightens, while the right-hand side’s lower bound increases. This progressively improves the chances of filtering out inactive groups. Ultimately, every group $i \notin \A^{*}$ will be accurately screened and discarded in a finite number of iterations.

\section{Experiments}
In this section, we outline our experimental setup and subsequently present the results and discussions.

\subsection{Experimental Setup}

\subsubsection{Design of Experiments}
We empirically evaluated our method on real-world benchmark datasets under the Group SLOPE framework, highlighting its computational advantages and its capability to reliably discard irrelevant groups.

To assess the efficiency of our algorithms in reducing computation time, we compared the runtime of our proposed algorithms against other competitive algorithms for solving Group SLOPE under various conditions. Given that the APGD algorithm is well-suited for scenarios where $n \ll p$ in the batch setting, and the SPGD algorithm is tailored for large-scale learning where $n$ is large in stochastic settings, we evaluated the runtime across both batch and stochastic setups using different datasets. The algorithms compared in batch and stochastic settings are summarized as follows: 
\begin{itemize} 
\item Batch setting 
\begin{itemize} 
\item APGD: Accelerated proximal gradient descent algorithms as presented in \cite{gossmann2015identification,brzyski2019group}. 
\item APGD + Screening: Accelerated proximal gradient descent algorithms enhanced with our safe screening rules. \end{itemize} 
\item Stochastic setting 
\begin{itemize} \item SPGD: Stochastic proximal gradient descent algorithm adopted from \cite{xiao2014proximal}. \item SPGD + Screening: Stochastic proximal gradient descent algorithm integrated with our safe screening rules. \end{itemize} 
\end{itemize}

\begin{table}[t]
\caption{The descriptions of benchmark datasets used in our experiments.}
\begin{center}
\begin{tabular}{lcc}
\toprule
\textbf{Dataset} & \textbf{Sample size} & \textbf{Attribute} \\
\midrule

     Duke Breast Cancer (DBC) & 44  & 7129   \\
    
     Colon Cancer (CC)  & 62 & 2000    \\

     IndoorLoc (IL) & 21048 & 520  \\

     SenseIT Vehicle (SV) & 78823 & 100 \\

\bottomrule
\end{tabular}

%\end{small}
\end{center}
%\vskip -0.1in
	\label{table:datasets}
\end{table}

To further confirm the effectiveness of our algorithms in filtering inactive variables, we evaluated the screening rate at each iteration of the algorithms with our screening rules applied to Group SLOPE, tested in both batch and stochastic setups across various datasets during the training process.

\subsubsection{Datasets}
Table \ref{table:datasets} provides an overview of the benchmark datasets utilized in our experiments. Duke Breast Cancer, Colon Cancer, and SenseIT Vehicle datasets are from the LIBSVM repository \cite{chang2011libsvm}, which can be accessed at \url{https://www.csie.ntu.edu.tw/~cjlin/libsvmtools/datasets/}.  IndoorLoc dataset is obtained from the UCI benchmark repository \cite{Dua:2019}, available at \url{https://archive.ics.uci.edu/ml/datasets.php}. IndoorLoc dataset includes $2$ tasks: IndoorLoc Latitude and IndoorLoc Longitude.

\begin{figure}[t]
  \centering
\subfigure[DBC]{\includegraphics[width=0.45\textwidth]{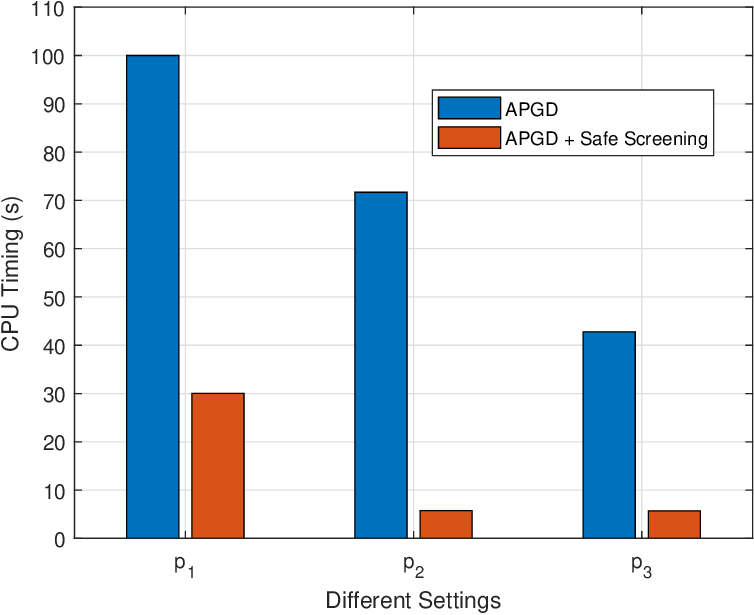}}
        \hspace{8mm}
\subfigure[CC]{\includegraphics[width=0.45\textwidth]{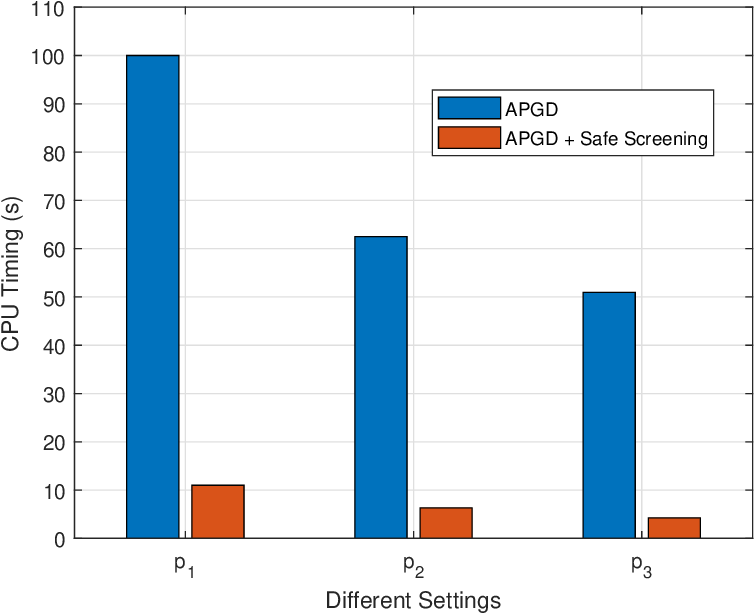}}
        \hspace{8mm}
\subfigure[IL]{\includegraphics[width=0.45\textwidth]{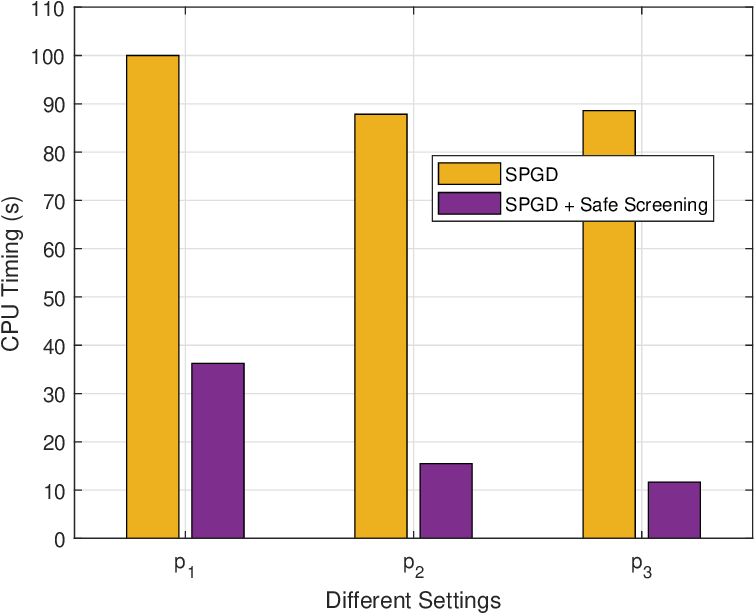}}
        \hspace{8mm}
\subfigure[SV]{\includegraphics[width=0.45\textwidth]{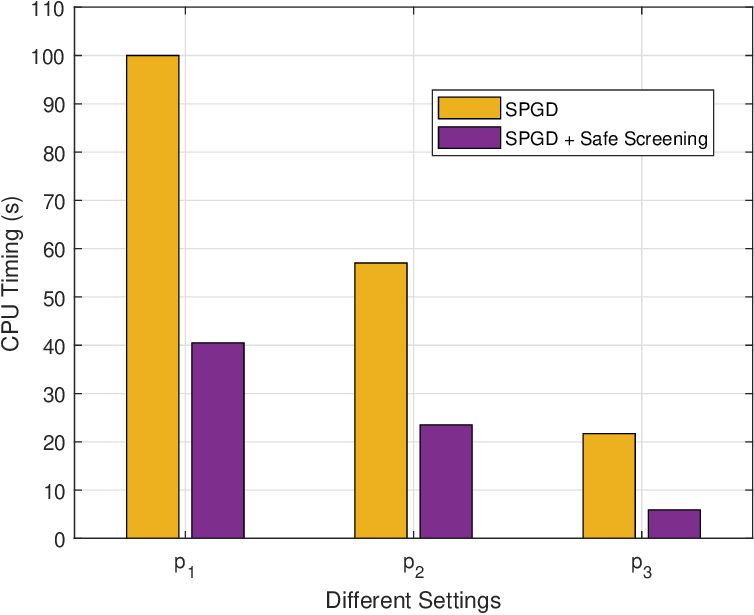}}
  \caption{Running time of the algorithms without and with safe screening for Group SLOPE.}
\label{fig21} 
\end{figure}

\begin{figure*}[t]
  \centering
\subfigure[DBC]{\includegraphics[width=0.45\textwidth]{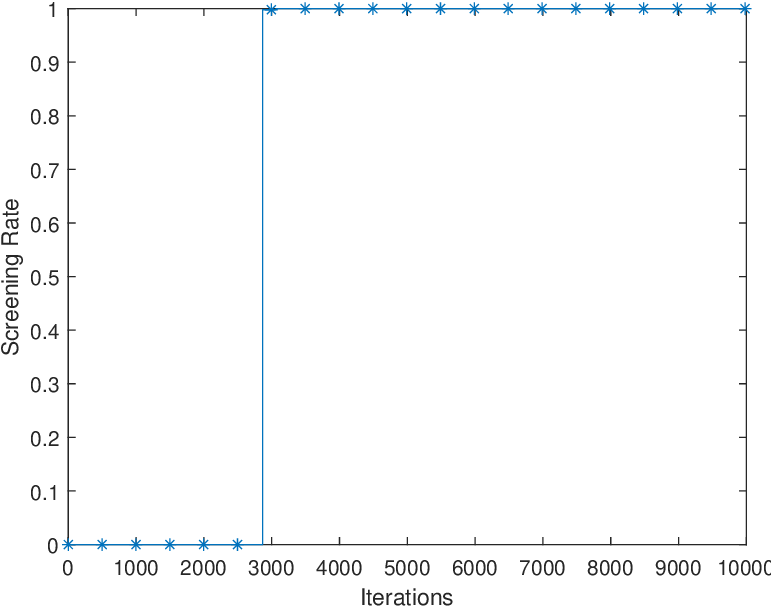}}
        \hspace{8mm}
\subfigure[CC]{\includegraphics[width=0.45\textwidth]{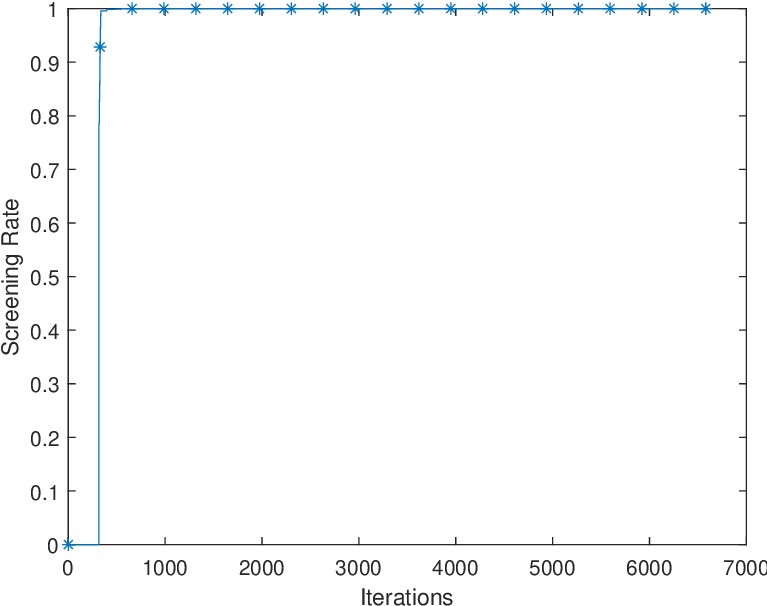}}
        \hspace{8mm}
\subfigure[IL]{\includegraphics[width=0.45\textwidth]{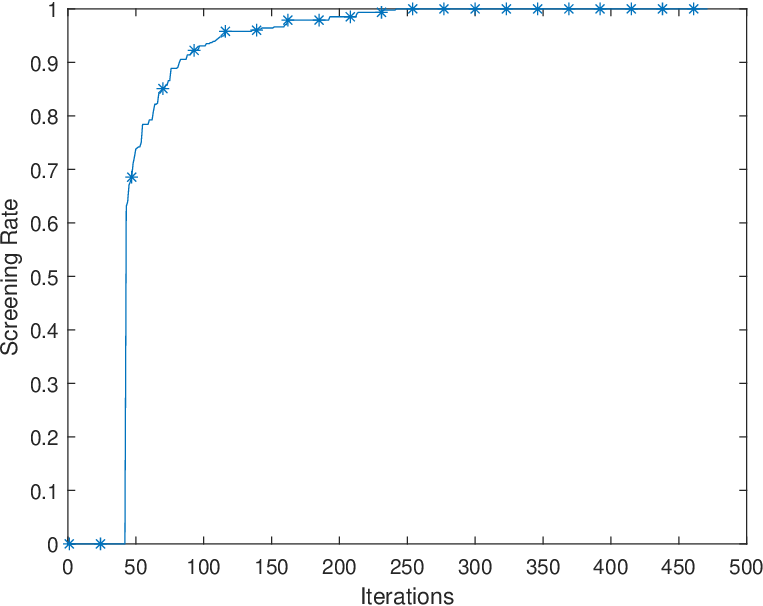}}
        \hspace{8mm}
\subfigure[SV]{\includegraphics[width=0.45\textwidth]{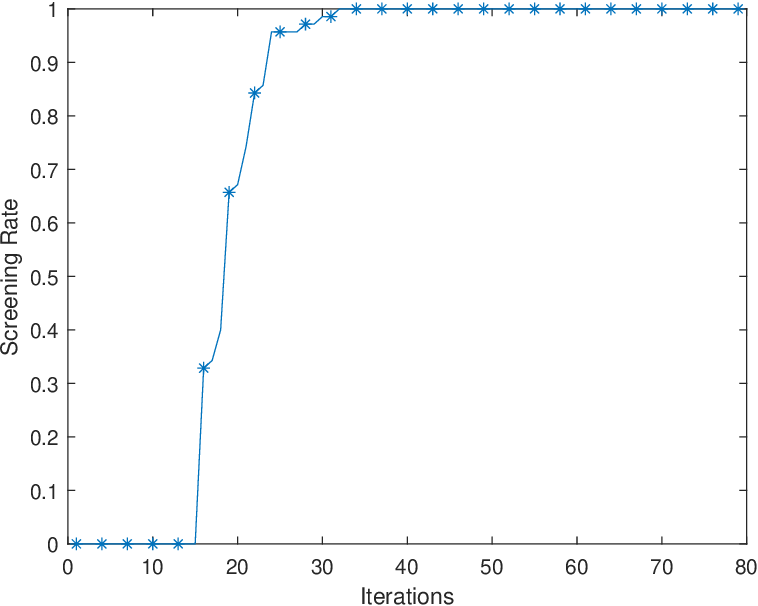}}
  \caption{Screening rate of our screening rule in both batch and stochastic settings for Group SLOPE.}
\label{fig22} 
\end{figure*}

\subsubsection{Implementation Details}
We implemented all the algorithms using MATLAB and compared the average CPU time across different algorithms on a 2.70 GHz machine over 5 trials. We adhered to the basic setup described in \cite{brzyski2019group} and set the tolerance for the duality gap and the dual infeasibility to $10^{-6}$. To ensure fairness in comparisons, the experimental configurations for Algorithm \ref{algAPGDScreen} and \ref{algSPGDScreen} followed the original APGD and SPGD algorithms, maintaining consistent hyperparameters across all setups. In the stochastic setting, the mini-batch size and the number of inner loop iterations were set to 30 or 40, depending on the dataset. The step size $\gamma$ was selected from a range of $10^{-8}$ to $10^{-5}$. Initially, the APGD algorithms exhibited a large duality gap, offering minimal benefit from our screening rule, so we first ran the algorithms without screening and later applied our screening rule with a warm start. For ease of comparison, the CPU time of each algorithm is presented as a percentage relative to the runtime of the first configuration for each dataset.

The OSCAR hyperparameter setting, which is commonly used (see \cite{oswal2016representational,zhong2012efficient,zhang2018learning,bao2019efficient}), was applied in all our experiments:
\begin{eqnarray} 
    \lambda_{i} = \alpha_{1} + \alpha_{2}(m-i),
\end{eqnarray}
where $\alpha_{1} = p_{i} \|X^\top y\|_{\infty}$ and  $\alpha_{2} = \alpha_{1}/d$ for Group SLOPE. For a fair comparison, the factor $p_{i}$ is used to control sparsity. In our experiments, we set $p_{i} = i * e^{-\tau}$, $i = 1,2, 3$. 

For Group SLOPE, batch algorithms were applied to the Duke Breast Cancer and Colon Cancer datasets, while stochastic algorithms were run on the SenseIT and IndoorLoc Longitude datasets, with $\tau$ set to 2 for IndoorLoc Longitude and 3 for the other datasets. We duplicate each feature $i$ to form the feature group  $I_i$ with size $|I_i| \sim U(1, s)$ where $U$ is the discrete uniform distribution and choose $s =10$ for Duke Breast Cancer and Colon Cancer datasets and $s =40$ for IndoorLoc Latitude and SenseIT Vehicle datasets. 

To assess the screening rate of our algorithms, it was calculated as the proportion of inactive partitions of groups screened by our method to the total number of inactive groups at the optimal solution. We used the $p_{1}$ setting for this evaluation.

\subsection{Experimental Results and Discussions}

Figures \ref{fig21}(a)–(d) present the average runtime comparisons of the proposed algorithms with and without the safe screening technique applied to Group SLOPE, under both batch and stochastic optimization frameworks across multiple experimental settings. In cases where the sample size is much smaller than the number of features ($n \ll d$), the APGD method achieves acceleration ratios between 3× and 14× when integrated with our screening approach. For large-scale problems, incorporating the screening rule into the SPGD variant leads to speed improvements ranging from 2.5× to 8× compared to its unscreened counterpart. These observations consistently validate the substantial efficiency gains attained by augmenting Group SLOPE solvers—both batch and stochastic—with our screening mechanism. The primary contributor to this acceleration is the effective early-stage exclusion of inactive feature groups, which lowers the computational complexity throughout training. Notably, the improvements become even more pronounced as the data dimensionality increases and sparsity intensifies.

Figures \ref{fig22}(a)-(d) depict the screening rates of our algorithms in both batch and stochastic settings for Group SLOPE, highlighting the effectiveness and characteristics of our screening rule. The data support the conclusion that our algorithm can successfully eliminate most inactive feature groups at an early stage, converge on the final active set, and ultimately screen nearly all inactive features within a finite number of iterations. This effectiveness is due to the tight upper and lower bounds of the screening test's left and right terms, respectively, which allow for more efficient screening of inactive variables as the optimization algorithm progresses. Specifically, as the algorithm converges, the duality gap narrows, leading to a continuously decreasing upper bound for the left term of the screening test, while the iterative strategy increasingly solidifies the order structure of variables, thus continuously increasing the lower bound for the right term.

\section{Conclusion}\label{Conclusion}
In this paper, we introduced a safe variable screening rule for Group SLOPE, addressing the challenges posed by the non-separable group effects. This approach significantly speeds up the training process by eliminating unnecessary computations for inactive variables. Our screening rule is uniquely dynamic, featuring a decreasing left term through tracking the intermediate duality gap, and an increasing right term by iteratively assessing the order of the primal solution, considering the unknown order structure. Importantly, the proposed rules are seamlessly integrable into existing iterative optimization methods, applicable in both batch and stochastic settings, such as the APGD and SPGD algorithms. We have theoretically proven that our screening rule remains safe when applied to these algorithms, ensuring no loss in accuracy. Extensive empirical results on real-world benchmark datasets demonstrate that our algorithms provide substantial computational benefits while maintaining accuracy in both batch and stochastic learning contexts by effectively screening out inactive variables.

\section*{Acknowledgement}
This work is supported in part by the National Science Foundation (NSF) grant IIS-2451436 and
Commonwealth Cyber Initiative grant HC-4Q24-059.

\bibliographystyle{splncs04}
\bibliography{mybib}

\end{document}